\theoremstyle{plain}
\theoremstyle{plain}
\newtheorem{proposition}{Proposition}
\theoremstyle{plain}
\theoremstyle{plain}
\theoremstyle{definition}
\newtheorem{assumption}{Assumption}
\theoremstyle{definition}
\newtheorem{definition}{Definition}
\theoremstyle{remark}
\newtheorem{remark}{Remark}
\Crefname{equation}{Equation}{Eqs.}
\title{\LARGE \bf
 Submodular Optimization for Keyframe Selection \& Usage in SLAM
}
\author{David Thorne$^1$, Nathan Chan$^1$, Yanlong Ma$^1$, Christa S. Robison$^2$, Philip R. Osteen$^2$, Brett T. Lopez$^1$
\thanks{*This research was sponsored by the DEVCOM Army Research Laboratory (ARL) under SARA CRA
W911NF-24-2-0017. Distribution Statement A: Approved for public release; distribution is unlimited.}
\thanks{$^1$ University of California, Los Angeles, Los Angeles, CA, USA {\tt\small \{davidthorne, nchan22, yanlong, btlopez\}@ucla.edu}}%
\thanks{$^{2}$DEVCOM Army Research Laboratory (ARL), Adelphi, MD, USA. \{\texttt{christopher.j.robison5, philip.r.osteen \}.civ@army.mil}}}% <-this % stops a space}
\begin{document}

\maketitle
\thispagestyle{empty}
\pagestyle{empty}

\begin{abstract}
Keyframes are LiDAR scans saved for future reference in Simultaneous Localization And Mapping (SLAM), but despite their central importance most algorithms leave choices of which scans to save and how to use them to wasteful heuristics.
This work proposes two novel keyframe selection strategies for localization and map summarization, as well as a novel approach to submap generation which selects keyframes that best constrain localization.
Our results show that online keyframe selection and submap generation reduce the number of saved keyframes and improve per scan computation time without compromising localization performance. 
We also present a map summarization feature for quickly capturing environments under strict map size constraints. 
\end{abstract}

\section{Introduction}
\label{sec:introduction}

% First paragraph: Introduce LiDAR as a sensor for SLAM and keyframes as a central component. Introduce the main purpose of this paper which is to propose theoretically sound solutions to keyframe identification and submap generation.

LiDAR is an increasingly popular sensing modality for Simultaneous Localization And Mapping (SLAM) because of its superior range measurement accuracy across different environments and conditions.
LiDAR odometry estimation algorithms work by computing a rigid body transform that best aligns the most recent scan, i.e., point cloud, with previously registered point clouds. 
Given the density of LiDAR data, having a SLAM system that saves all incoming point clouds would be infeasible from both a memory and computation point of view.
A more pragmatic approach is to save unique point clouds called keyframes that can be combined to form descriptive submaps for future scan alignment.
Despite keyframes and submaps playing a central role in SLAM, their treatment has largely been left to heuristics which lack generality and require extensive, environment-specific tuning to achieve adequate performance.
Another shortcoming of many LiDAR SLAM pipelines is their inability to create maps on-the-fly tailored for sharing with other agents or downstream processes that require specific information. 
We address the issues of online keyframe selection and usage by leveraging the unique properties of submodular optimization to i) save unique keyframes according to a neural network which encodes point cloud similarity and ii) generate submaps via an optimization of the keyframes which best constrain scan alignment.
We also propose a principled method for generating size-constrained summary maps for sharing or downstream processes.

% Second paragraph: summarize previous work of DLO & DLIO, point to them as justification for keyframe-based direct LiDAR methods (because this was mentioned significantly in DLO and also because everything in this paper is centered around keyframes so I need a strong justification for their existence).

\begin{figure}[t!]
    \centering
    \includegraphics[width=.48\textwidth]{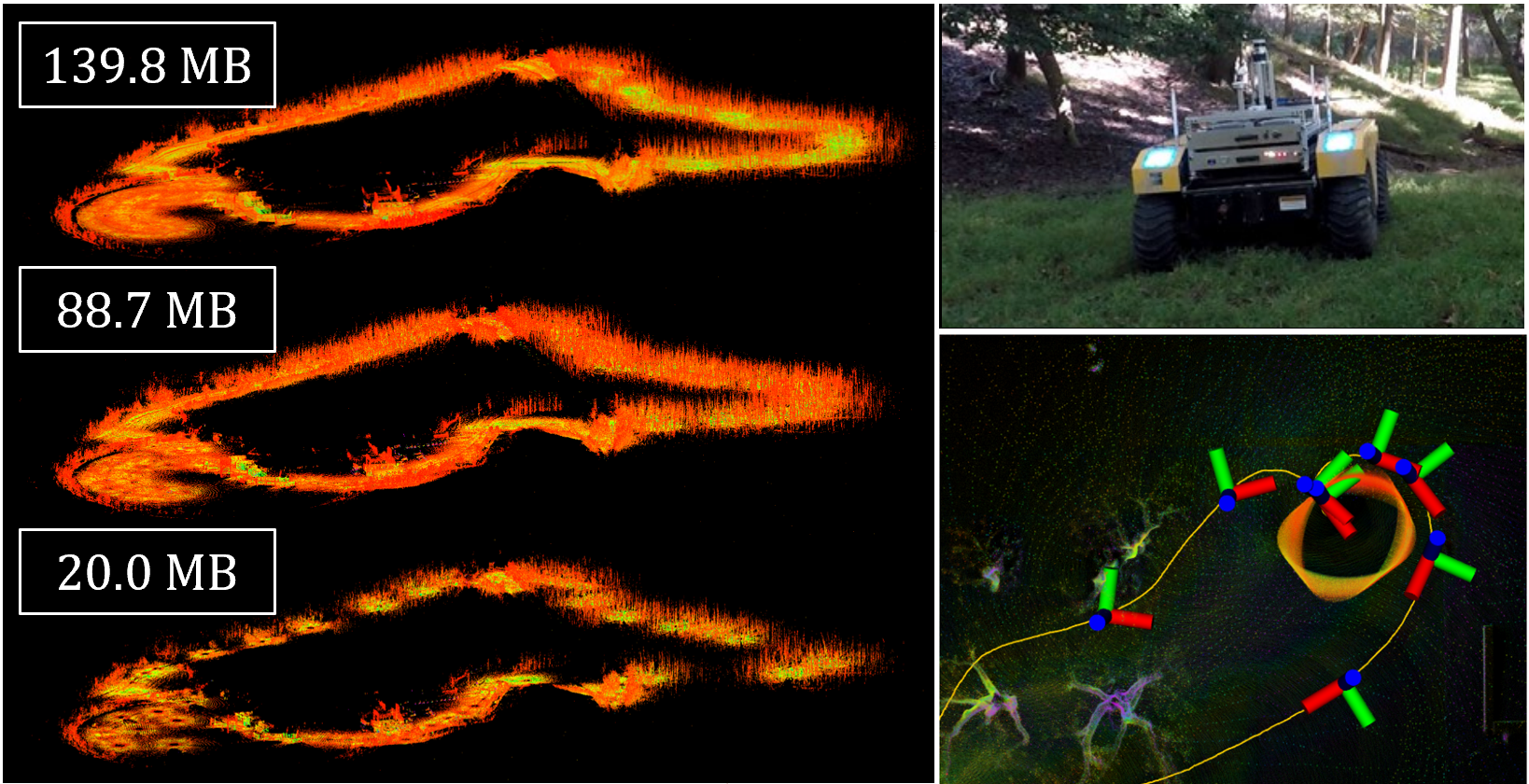}
    \caption{(Left) Dense point cloud map of 2.3km forest loop with smaller summary maps. The dense map was generated during online SLAM and uses 456 keyframes as opposed to the summary maps which use 300 and 75 keyframes and are built in under one second. (Top-right) Modified Clearpath Robotics Warthog robot with LiDAR sensor used for collecting the forest loop dataset. (Bottom-right) Unique keyframe selection, each keyframe (axis) is chosen to capture a unique part of the map.} %\textbf{Could make the zoom in a long horizontal box so the figure has a feeling of layers to it, which mirrors the fact that the paper has layers (like an ogre, professor, or onion)}}
    \label{fig:top-right}
    \vskip -0.3in
\end{figure}

% Second paragraph should be about reinterpreting keyframe identification, submap creation, and environment summarization as a combinatoric optimization and/or decision making process. This is where you can motivate using submodular optimization to efficiently solve combinatoric optimization problems. It would probably be good to mention what types of problems submodular optimization has been used on in addition to where it has fit in robotics, especially in slam. You can then mention how we're using it is completely novel.

Our key insight is that decisions regarding keyframe selection and usage in addition to map summarization can be formulated as combinatorial optimization problems.
Submodularity, or the mathematically defined notion of diminishing returns, captures the trade-offs inherent in these problems and provides efficient algorithms for approximately solving NP-hard combinatorial problems with suboptimality guarantees. 
Submodular optimization has been used in a variety of robot applications such as single- and multi-agent coverage \cite{zhang2016submodular,corah2019distributed,shi2021communication}, but has been used sparingly in SLAM. 
The most relevant to our work is that of \cite{carlone2018attention} which used submodular optimization for selecting the most useful set of features to track over a window for visual-inertial odometry.
Other SLAM works that utilize submodular optimization have primarily focused on sparsifying pose graphs \cite{khosoussi2019reliable,doherty2022spectral}.

% , we make several LiDAR-specific modifications to employ a similar strategy for generating submaps.
% We also use streaming submodular algorithms which have been used to summarize large image datasets \cite{badanidiyuru2014streaming} in a novel application for summarizing point cloud datasets.

The contributions of this work are threefold. 
First, we propose a keyframe selection strategy which saves point clouds with unique descriptors from a neural network, resulting in a sparser set of keyframes that use less memory than our baseline.
Next, we leverage submodular optimization to generate smaller submaps which better constrain scan alignment, leading to improvements in computation time when combined with sparse keyframe sets.
In our final contribution, we combine the LiDAR neural network and streaming submodular algorithms to generate size-constrained maps for efficient communication and fast operation in downstream processes. 
A key element to our methods is the efficient comparison between point clouds enabled by a neural network which encodes the similarity of LiDAR point clouds as descriptor vectors \cite{ma2022overlaptransformer}.
We demonstrate our work on long duration datasets including a 2.3 km loop (pictured in \cref{fig:top-right}) and a 6.2 km out-and-back at the Army Research Laboratory Graces Quarters test facility.

% \pagebreak
 
\section{Related Works}
\label{sec:related_works}

The decision to save a point cloud as a keyframe---a process refered to as keyframe selection---must balance localization accuracy with memory usage.
This is because keyframes are used as target point clouds for registration but must be saved indefinitely in memory \cite{kretzschmar2011efficient}.
LiDAR keyframe selection has largely relied on heuristics such as distance from other keyframes \cite{shan2020lio,li2021towards}, which can lead to keyframe sets that are redundant or insufficient for localization depending on the threshold parameter.
A few works have selected sparse keyframe sets via feedback from scan alignment \cite{chen2023dliom, lin2023infola}. 
We use this strategy to supplement a proactive policy which selects keyframes with distinct descriptors. 

Submaps are composed of nearby (local) keyframes and are important for keeping the computation for point cloud registration manageable. 
Despite their importance, few have investigated submap generation and its effect on LiDAR SLAM. 
Instead, most LIO algorithms rely on an excess of nearby keyframes \cite{shan2020lio, li2021towards}.
Past work has demonstrated the value of including keyframes from the edge of the scan \cite{chen2022direct,chen2023direct}; however, larger submaps demand more computational resources.
Aside from keyframe-based methods, other direct LIO algorithms use a sliding window local map with all recent points \cite{xu2022fast,reinke2022locus}.
These methods include all points regardless of their relevance, which inflates the submap size.

One technique that has shown to be effective at (passively) characterizing the quality of submaps/local features is the eigenvalues of the Hessian of the scan alignment optimization \cite{zhang2016degeneracy, ebadi2021dare, tagliabue2021lion, han2023dams}. 
However, these methods only use the Hessian eigenvalues as a passive metric for monitoring localization performance.
A related work used the Hessian eigenvalues to select better correspondences for Iterative Closest Point (ICP) algorithms \cite{gelfand2003geometrically} in generic point cloud alignment.
A similar approach for Visual Inertial Odometry (VIO) \cite{carlone2018attention} showed that optimizing a feature set over the minimum eigenvalue of an information matrix using submodular optimization yielded a near optimal set with low computation times.

In addition to feature selection in VIO, submodular optimization has been used in SLAM to prune LiDAR keyframes \cite{kretzschmar2011efficient}, generate sparse pose graphs \cite{khosoussi2019reliable,doherty2022spectral}, and select navigation anchor points \cite{chen2021anchor}.
Streaming submodular algorithms \cite{badanidiyuru2014streaming, feldman2018less} have been used to summarize large image datasets, but have not been used for LiDAR map summarization or SLAM to the best of our knowledge.

\section{Preliminaries}
\label{sec:problem_formulation}

This work is concerned with formulating and efficiently solving combinatorial optimization problems that arise in SLAM. 
The problem of interest is formally stated as
\begin{equation}
\label{eq:submod_optimization}
    \max_{{S} \subseteq E} \, f({S})  ~~ \text{Subject to constraints on ${S}$,}
\end{equation}
where $f: S \rightarrow \mathbb{R}$ is a set function and $E$ is comprised of all possible discrete elements. 
Often times the constraint on $S$ is simply the number of elements in $S$.
The following definitions and results from submodular optimization will be important for solving \cref{eq:submod_optimization} efficiently.

\begin{definition}
    A set function $f:{S} \rightarrow \mathbb{R}$ is said to be \emph{non-negative monotonic} if $f(\emptyset) \geq 0$ and $f(B) \geq f(A)$ for all subsets $A \subseteq B \subseteq E$.
\end{definition}

\begin{definition}
\label{def:marginal_gain}
    The marginal value of a set function given an added element is defined as $\Delta f(A \cup \{i\}) = f(A \cup \{i\}) - f(A)$ where the element $i \in E$ is added to set $A$.
\end{definition}

\begin{definition}
\label{def:submodularity}
    A set function $f:{S} \rightarrow \mathbb{R}$ is \emph{submodular} if for $A \subseteq B \subseteq E$ and any element $i \in E \, \backslash \, B$ then
\end{definition}
\vskip -0.25in
\begin{align}
    f(A \cup \{i\}) - f(A) \geq f(B \cup \{i\}) - f(B).
\end{align}
Put simply, submodularity is equivalent to diminishing returns.

The celebrated result in submodular optimization is the suboptimality bound of the simple greedy optimization for non-decreasing monotone submodular functions with cardinality constraints $|S|\leq N$, in which the set $S^{\#}$ is built over $N$ iterations of selecting the highest marginal value element.
If $S^{*}$ is the optimal solution to \cref{eq:submod_optimization}, then from \cite{nemhauser1978best}
\begin{align}
    f({S}^{\#}) \geq (1-e^{-1}) f({S}^{*}),
\end{align}
which shows the greedy solution is at worst a factor of 0.63 of the optimal. 
This result justifies using simple greedy optimization to solve $\cref{eq:submod_optimization}$ to a known suboptimality bound.

\begin{comment}
\begin{algorithm}[t]
\caption{Greedy Optimization for Keyframe Sets}
\label{alg:greedy}
\begin{algorithmic}[1]
    \renewcommand{\algorithmicensure}{\textbf{Output:}}
    \ENSURE $K$ \\
    \STATE $K \leftarrow \emptyset$
    \WHILE {$|K| \leq k$}
        \STATE $s \leftarrow \text{arg}\,\text{max}_{s \in S}(f(K \cup \{s\}) - f(K))$
        \STATE $K = K \cup \{s\}$
    \ENDWHILE
\end{algorithmic}
\end{algorithm}
\end{comment}

Functions that are non-negative monotone but are not submodular can still enjoy suboptimality guarantees with greedy selection.
The following two definitions originate in \cite{das2011submodular}, but we use the notation from \cite{carlone2018attention}.

\begin{definition}
\label{def:submod_ratio}
    The submodularity ratio of a non-negative set function $f: S \rightarrow \mathbb{R}$ with respect to a set $A$ is
\end{definition}
\vskip -0.2in
\begin{align}
\label{eq:submod_ratio_eq}
    \gamma_{A} \triangleq \min_{L \subseteq A, S:|S|\leq \kappa, S \cap L = \emptyset} \frac{\sum_{s\in S} (f(L \cup \{s\}) - f(L))} {f(L \cup S) - f(L)},
\end{align}
where $\kappa$ is a strictly positive integer.

\begin{definition}
\label{def:approx_submod_bound}
    Let $f: S \rightarrow \mathbb{R}$ be a non-negative monotone set function and let $S^{*}$ be its optimal solution subject to a cardinality constraint. The set $S^{\#}$ computed by the greedy algorithm is then given by
    \begin{equation}
        f(S^{\#}) \geq (1-e^{-\gamma_{S^{\#}}}) f(S^{*}),
    \end{equation}
    where $\gamma_{S^{\#}}$ is the submodularity ratio of the greedy solution.
\end{definition}

\begin{remark}
    Functions with submodular ratios between 0-1 are henceforth referred to as approximately submodular due to the suboptimality guarantees they enjoy from \cref{def:approx_submod_bound}.
\end{remark}

We conclude with a discussion on streaming submodular algorithms which have proven to be effective at solving combintorial optimization problems with a single pass over the data rather than multiple greedy passes.
The streaming algorithm most pertinent to this work is described in \cref{sub:map_summarization} and given by Algorithm 3, but the intuition is that elements with high marginal value are added to a potential solution based on a guess of the optimal function value.
With a range of optimal value guesses, the streaming algorithm can build several potential solution sets and return the best which is guaranteed to be within a 1/2-$\epsilon$ factor of the optimal set where $\epsilon$ is a granularity constant \cite{badanidiyuru2014streaming}.

\textit{Notation.} We call the set of all collected point clouds $\textbf{E}$. 
We denote the set of keyframes as $\mathcal{K} \subseteq \textbf{E}$, and the local submap as $\mathcal{S} \subseteq \mathcal{K}$.
All sets have a defined ordering function where $\textbf{E}$ and $\mathcal{K}$ are ordered chronologically and the ordering of $\mathcal{S}$ is defined by when an element is added.
We use a matching upper case letter to indicate a member of a set, e.g., $S \in \mathcal{S}$.
The position of an element is given by a subscript index $S_{i}$, and we adopt the notation $\mathcal{S}_{<i}$ to denote the subset of $\mathcal{S}$ such that only elements with index less than $i$ are included, e.g., $\mathcal{S} = \{S_{0},S_{1},S_{2}\}, ~ \mathcal{S}_{<2} = \{S_{0},S_{1}\}$.

\section{LiDAR Scan Neural Network}
\label{sec:LSNN}

We use a discriminative and generalizable function that maps point clouds to global descriptor vectors which describe unique point cloud features. 
More formally, we define a function $\phi: \textbf{E} \rightarrow \mathcal{G} \subset \mathbb{R}^{p}$ such that, for any point clouds $E_a$, $E_i$, $E_j$ $\in \textbf{E}$ and their corresponding descriptors $\phi(E_a)$, $\phi(E_i)$, $\phi(E_j)$ $\in \mathcal{G}$, we have 
$\mathcal{\sigma}(E_i, E_a) \geq  \mathcal{\sigma}(E_j, E_a) \implies ||\phi(E_i) – \phi(E_a)|| \leq ||\phi(E_j) – \phi(E_a)||$
where $\mathcal{\sigma}$ represents similarity between two point clouds.
We normalize descriptors to be on the \emph{p}-dimension hypersphere such that $||\phi(E)||=1$ and $||\phi(E_{i})-\phi(E_{j})|| \leq 2$.

We adopt the neural network proposed by \cite{ma2022overlaptransformer}; the overall framework is presented in Fig. \ref{fig:nn_pipeline}. 
Compared to point- and discretized-based methods \cite{arce2023padloc, komorowski2021minkloc3d, vidanapathirana2022logg3d}, the neural network is compact and generalizable because it can achieve fast execution without any semantic information. 
Instead of using the overlap between the projected range images to supervise the training, we measure the similarity between two point clouds $E_i$ and $E_j$ directly using the 3D Jaccard index \cite{chen2023dliom}.
Defined as $J(E_i, E_j) = {|E_i  \bigcap E_j |} / {|E_i \bigcup E_j|}$, the Jaccard index is the equivalent of overlap in the 3D point cloud domain to provide a direct measure of point cloud similarity which enhances training results.
Unless otherwise specified, the dimension of the output descriptor is $\text{dim}(\mathcal{G}) = {256}$.

\begin{figure}[t!]
    \centering
    \vspace{6pt}
    \includegraphics[width=0.45\textwidth]{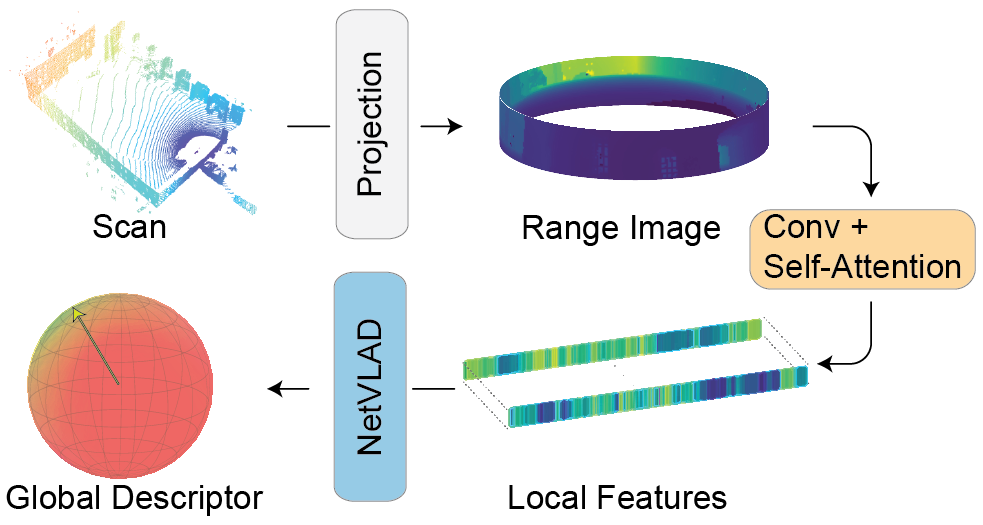}
    \caption{Global descriptor generation. The point cloud is projected into a range image (\textcolor{gray}{gray}) and fed into the local features extraction module (\textcolor{orange}{orange}) which consists of convolutional neural networks and a multi-head self-attention block. The NetVLAD module (\textcolor{blue}{blue}) pools all the local features to generate the global descriptor.}
    \label{fig:nn_pipeline}
    \vskip -0.2in
\end{figure}

\section{Methods}
\label{sec:methods}

In this section, we show how submodular optimization can be used to reduce the computational burden of SLAM systems without sacrificing localization accuracy. 
In the first section, we use point cloud descriptors to select a diverse and memory efficient set of keyframes.
Next, we generate submaps by selecting keyframes which best constrain scan alignment using submodular optimization.
Finally, we describe our approach to quickly summarizing environments using streaming submodular optimization.

\subsection{Keyframe Selection via Feature Descriptor}
\label{subsec:keyframe_identification}

Selecting a keyframe from the stream of point clouds is a process that must balance localization performance—i.e., ensuring a candidate keyframe has sufficient overlap with saved keyframes for point cloud alignment—and memory usage, since any new keyframe must be kept in memory.
One could formulate a combinatorial optimization problem to select the optimal set of keyframes $\mathcal{K}^* \subseteq \textbf{E}$ given localization and memory constraints. 
While we present an offline solution to this optimization in \cref{sub:map_summarization}, the online version is more nuanced since keyframe selection occurs before all scans are acquired. 
Despite this causality constraint, submodularity can still be utilized to obtain a suboptimality bound for our keyframing strategy. 

Our approach relies on the neural network discussed in \cref{sec:LSNN} to efficiently determine the similarity between any two point clouds. 
We define the marginal value of any point cloud as the minimum Euclidean distance between the descriptor of a point cloud $E$ and any previously selected keyframe $\Delta f(\mathcal{K} \cup E) = \min_{K\in \mathcal{K}} || \phi(E) - \phi(K) ||$.
This marginal value inherently defines an objective function
\begin{equation}
\label{eq:keyframe_id_obj}
        f(\mathcal{K}) = \sum_{K_{i} \in \mathcal{K}} \min_{K_{j} \in \mathcal{K}_{<i}} ||\phi(K_{i}) - \phi(K_{j})||.
\end{equation}
Although we do not directly maximize \cref{eq:keyframe_id_obj}, we still obtain a suboptimality bound by only selecting keyframes with high marginal value because $f$ is a non-decreasing monotonic submodular set function, as proven in the following proposition. 

\begin{proposition}
    The set function $f: \mathcal{K} \rightarrow \mathbb{R}$ in \cref{eq:keyframe_id_obj} is a non-decreasing monotone submodular function.
\end{proposition}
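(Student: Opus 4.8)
The plan is to exploit the fact that $f$ in \cref{eq:keyframe_id_obj} is assembled additively from the per-element contributions $\min_{K_j \in \mathcal{K}_{<i}} ||\phi(K_i) - \phi(K_j)||$, so that all three properties can be read off from the single marginal-gain expression $\Delta f(\mathcal{K} \cup \{E\}) = \min_{K \in \mathcal{K}} ||\phi(E) - \phi(K)||$. First I would dispatch non-negativity: $f(\emptyset) = 0$ because the empty sum vanishes, and every summand is a minimum of Euclidean norms and hence non-negative, so $f(\mathcal{K}) \geq 0$ for all $\mathcal{K}$.

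For monotonicity and submodularity I would analyze what happens when a new point cloud $E$ enters $\mathcal{K}$ as its chronologically latest element, which is precisely how keyframes are acquired during online SLAM. Appending $E$ leaves every existing summand untouched, since each predecessor set $\mathcal{K}_{<i}$ is unchanged, and contributes only the new term $\Delta f(\mathcal{K} \cup \{E\}) = \min_{K \in \mathcal{K}} ||\phi(E) - \phi(K)|| \geq 0$; hence $f(\mathcal{K} \cup \{E\}) \geq f(\mathcal{K})$, giving the non-decreasing property. For submodularity I would take $A \subseteq B$ with $E$ appended after both. Because every keyframe in $A$ also lies in $B$, the minimization defining $\Delta f(B \cup \{E\})$ ranges over a superset of the one defining $\Delta f(A \cup \{E\})$, and a minimum over a larger set can only shrink, so $\Delta f(B \cup \{E\}) \leq \Delta f(A \cup \{E\})$. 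This is exactly the diminishing-returns inequality of \cref{def:submodularity}, so the minimum-over-a-growing-set structure delivers both properties at once.

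The hard part is not either inequality in isolation but the fact that $f$ is defined \emph{through} the chronological ordering, so its value is sensitive to where an element is inserted rather than merely to the underlying set. The clean argument above relies on the new element being appended at the end; were an element instead inserted in the middle of the ordering, it would join the predecessor sets $\mathcal{K}_{<i}$ of all later keyframes and could shrink their terms, so the bookkeeping would no longer collapse to a single non-negative marginal gain. I would therefore make explicit that additions occur at the chronological frontier---the causal, streaming setting in which keyframes are drawn from an incoming scan sequence---and establish the monotone and submodular claims along this append direction, which is also the direction in which the greedy selection of \cref{def:approx_submod_bound} operates. Pinning down that this append-only reduction is genuinely the one the suboptimality bound invokes is the step I would treat most carefully.
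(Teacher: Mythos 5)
Your proof is correct and follows essentially the same route as the paper's: $f(\emptyset)=0$, monotonicity from the non-negativity of the min-distance marginal gain, and submodularity because the minimum defining $\Delta f(\mathcal{K}_B \cup \{E\})$ ranges over a superset of the set defining $\Delta f(\mathcal{K}_A \cup \{E\})$ and can therefore only be smaller. Your explicit caveat that the argument requires new elements to be appended at the chronological frontier---so that the predecessor sets $\mathcal{K}_{<i}$ of existing summands are untouched---is a genuine subtlety the paper leaves implicit (its marginal value $\Delta f(\mathcal{K} \cup E) = \min_{K \in \mathcal{K}} \|\phi(E)-\phi(K)\|$ silently assumes exactly this append-only setting), so your treatment is, if anything, the more careful one.
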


\begin{proof}
    We begin by defining the value of the empty set $f(\emptyset) = 0$.
    Monotonicity is proven because the marginal value is the Euclidean distance to the nearest keyframe which much be positive.
    Submodularity is proven by verifying the marginal values using two keyframe sets $\mathcal{K}_{A} \subseteq \mathcal{K}_{B} \subseteq \textbf{E}$ follow \cref{def:submodularity} $\Delta f(\mathcal{K}_{A} \cup K) \geq \Delta f(\mathcal{K}_{B} \cup K) \implies \min_{K_{A} \in \mathcal{K}_{A}} ||\phi(K) - \phi(K_{A})|| \geq \min_{K_{B} \in \mathcal{K}_{B}} ||\phi(K) - \phi(K_{B})||$ which must be true since $\text{arg}\,\text{min}_{K_{A} \in \mathcal{K}_{A}} ||\phi(K) - \phi(K_{A})|| \in \mathcal{K}_{B}$, but $\text{arg}\,\text{min}_{K_{B} \in \mathcal{K}_{B}} ||\phi(K) - \phi(K_{B})|| \notin \mathcal{K}_{A}$.
    In other words, the minimizer w.r.t. $\mathcal{K}_{A}$ must be in $\mathcal{K}_{B}$ but the minimizer w.r.t. $\mathcal{K}_{B}$ may not belong to $\mathcal{K}_{A}$.
\end{proof}

We additionally require any keyframe set to be sufficient for localization and formalize this using degeneracy, a sensor- and environment-agnostic metric proposed in our previous work \cite{chen2023dliom}. 
To enforce this constraint, a keyframe is selected whenever $d = \frac{m^{2}}{\lambda_{min}(\mathcal{H}) \sqrt{z}} \geq \beta$, where $\lambda_{min}(\mathcal{H})$ is the minimum eigenvalue of the scan alignment algorithm (which we discuss in \cref{sub:submap_generation} thoroughly) and $m$ and $z$ are adaptive parameters defined in \cite{chen2023dliom}.

\begin{figure}
    \centering
    \vspace{6pt}
    \includegraphics[width=0.48\textwidth]{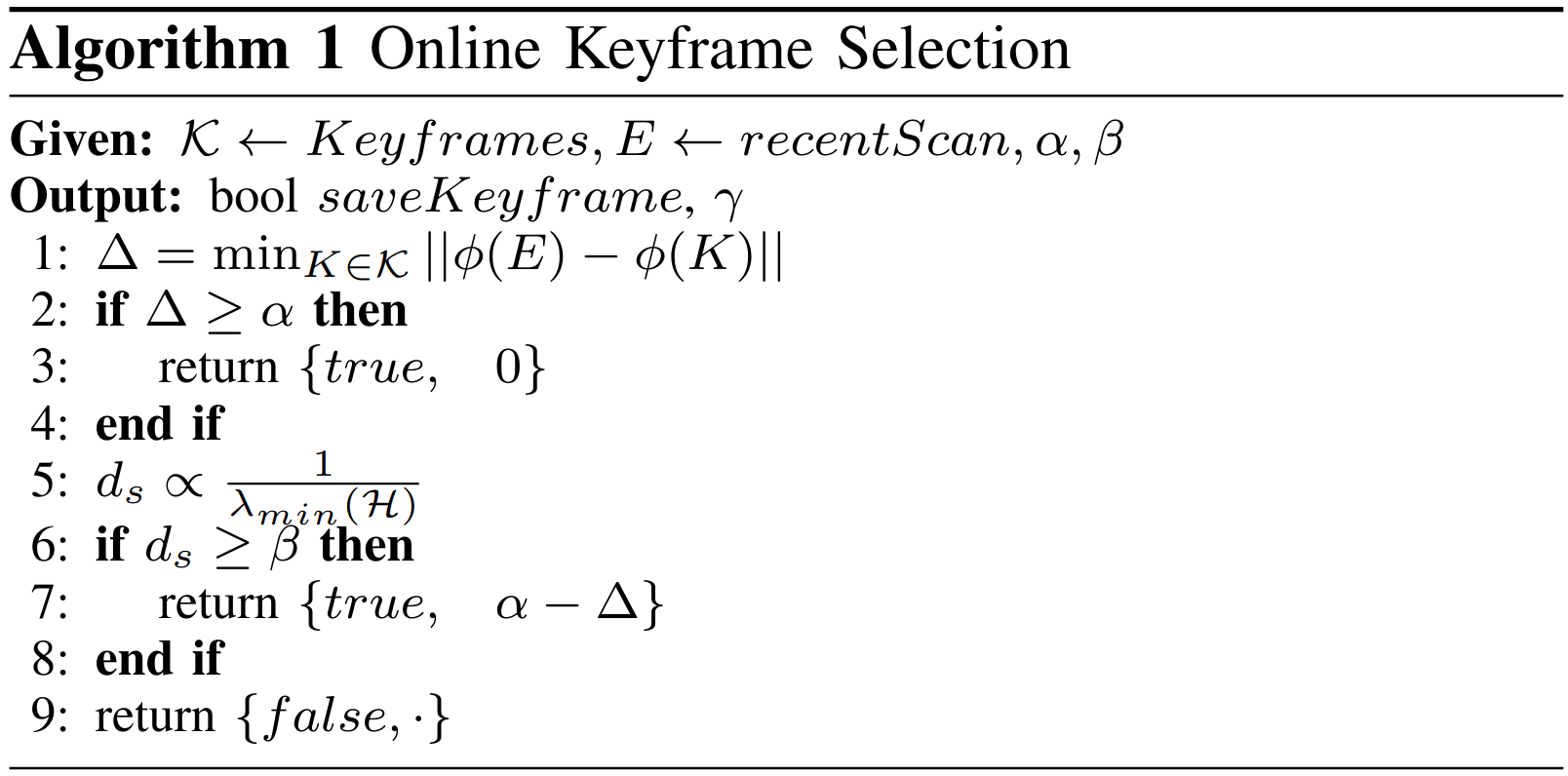}
    \label{alg:keyframe-selection}
    \vskip -0.45in
\end{figure}
% \begin{algorithm}[t]
% \footnotesize
% \caption{Online Keyframe Selection}
% \label{alg:keyframe-selection}
% \begin{algorithmic}[1]
%     \renewcommand{\algorithmicrequire}{\textbf{Given:}}
%     \renewcommand{\algorithmicensure}{\textbf{Output:}}
%     \REQUIRE $\mathcal{K} \leftarrow Keyframes,E \leftarrow recentScan, \alpha, \beta$ \\
%     \ENSURE bool $saveKeyframe$, $\gamma$ \\
%     %\STATE $g_{s} = \phi(s)$
%     \STATE $\Delta = \min_{K \in \mathcal{K}} ||\phi(E)-\phi(K)||$
%     % \FOR {$K \in \mathcal{K}$}
%     %     %\STATE $g_{k} = \phi(k)$
%     %     \IF {$||\phi(E)-\phi(K)|| < \Delta$}
%     %         \STATE $\Delta = ||\phi(E)-\phi(K)||$
%     %     \ENDIF
%     % \ENDFOR
%     \IF {$\Delta \geq \alpha$}
%         \STATE return $\{true, \hspace{0.1in} 0\}$
%     \ENDIF
%     \STATE $d_{s} \propto \frac{1}{\lambda_{min}(\mathcal{H})}$
%     \IF {$d_{s} \geq \beta$}
%         \STATE return $\{true, \hspace{0.1in} \alpha - \Delta\}$
%     \ENDIF
%     \STATE return $\{false, \cdot\}$
% \end{algorithmic}
% \end{algorithm}

Algorithm 1 describes our keyframe selection strategy, where a keyframe is selected according to a feature distance condition with threshold $\alpha$, and a degeneracy condition with threshold $\beta$.
In addition to a boolean variable representing whether to save a keyframe or not, we also return the added suboptimality (with respect to \cref{eq:keyframe_id_obj}) $\gamma$ such that the marginal value of the new keyframe is $\alpha - \gamma$.
In line 1 we find the smallest distance to any keyframe in descriptor space.
If the closest distance is greater than $\alpha$, we return true and report no additional suboptimality on line 3.
On lines 5-8, we compute degeneracy and save a keyframe if it exceeds the threshold $\beta$.
If neither condition is met, Algorithm 1 returns false.
By knowing the marginal value of each keyframe at the time of selection, we also maintain a known suboptimality bound which is given in the following proposition.

\begin{proposition}
    The keyframe set $\mathcal{K}$ generated by Algorithm 1 maintains a suboptimality bound relative to an optimal keyframe set $\mathcal{K}^{*}$ of $f(\mathcal{K}) \geq \frac{\alpha}{2} f(\mathcal{K}^{*}) - \sum_{K \in \mathcal{K}} \gamma_{K}$.
\end{proposition}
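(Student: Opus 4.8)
The plan is to adapt the standard threshold-streaming analysis (in the spirit of \cite{badanidiyuru2014streaming}) to our causal setting, using the monotone submodularity of $f$ established in the previous proposition as the structural backbone. I would set $\mathrm{OPT} = f(\mathcal{K}^{*})$ and build the bound from two complementary estimates: a lower bound on $f(\mathcal{K})$ in terms of the marginals actually accrued during selection, and an upper bound on $\mathrm{OPT}$ in terms of the marginals of the optimal elements that the algorithm chose to reject.

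First I would exploit the fact that, for the objective in \cref{eq:keyframe_id_obj}, the value of the set equals the telescoping sum of the marginals at insertion time, i.e. $f(\mathcal{K}) = \sum_{K_i \in \mathcal{K}} \Delta f(\mathcal{K}_{<i} \cup \{K_i\})$. By construction of Algorithm 1 every inserted keyframe has marginal value at least $\alpha - \gamma_K$ (feature-distance insertions clear the threshold with $\gamma_K = 0$, while degeneracy-triggered insertions may fall short of $\alpha$ by exactly the reported deficit $\gamma_K$), which yields $f(\mathcal{K}) \geq \alpha |\mathcal{K}| - \sum_{K \in \mathcal{K}} \gamma_K$. Second, by monotonicity and the submodular decomposition, $f(\mathcal{K}^{*}) \leq f(\mathcal{K} \cup \mathcal{K}^{*}) = f(\mathcal{K}) + \sum_{o \in \mathcal{K}^{*} \setminus \mathcal{K}} \Delta f(o \mid \mathcal{K} \cup T_o)$, where $T_o$ collects the earlier optimal elements. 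The crucial observation is that each $o \in \mathcal{K}^{*} \setminus \mathcal{K}$ was examined against a prefix of $\mathcal{K}$ and rejected by the feature-distance test, so its marginal at that time was at most $\alpha$; \cref{def:submodularity} then transfers this bound to the final set, giving $\Delta f(o \mid \mathcal{K}) \leq \alpha$ and hence $f(\mathcal{K}^{*}) \leq f(\mathcal{K}) + \alpha\,|\mathcal{K}^{*} \setminus \mathcal{K}|$.

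With these two estimates in hand, I would combine them by the balancing argument characteristic of threshold methods: treating the two inequalities as competing lower bounds on $f(\mathcal{K})$ and averaging them produces the factor $1/2$, while the normalization of the descriptors to the hypersphere ($\|\phi(E_i) - \phi(E_j)\| \leq 2$) controls the cardinality terms and lets the rejected mass $\alpha |\mathcal{K}^{*} \setminus \mathcal{K}|$ be re-expressed against $f(\mathcal{K}^{*})$, surfacing the stated coefficient $\alpha/2$; the shortfall terms then collect into $-\sum_{K \in \mathcal{K}} \gamma_K$.

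The hard part will be the combination step. Unlike classical Sieve-Streaming, where the threshold is tuned to $\mathrm{OPT}/(2k)$ so that the ``filled'' and ``not filled'' cases balance exactly, here $\alpha$ is a fixed, user-chosen threshold and there is no hard cardinality budget, so the clean $1/2$ does not drop out automatically; I expect the delicate accounting to be (i) relating $|\mathcal{K}|$ and $|\mathcal{K}^{*}|$ through the bounded descriptor distances, and (ii) simultaneously tracking the two distinct insertion rules so that the degeneracy-triggered keyframes are charged correctly to $\gamma_K$ rather than corrupting the threshold bound. A secondary subtlety is the causality of the online setting: an optimal element is tested only against the keyframes present when it streams in, so submodularity must be invoked carefully to push each rejection bound forward onto the final set $\mathcal{K}$.
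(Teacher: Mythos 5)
Your proposal assembles the correct first half of the argument but never closes it, and the closing step is where it diverges from what is actually needed. The paper's proof is a three-line computation that requires neither your rejection bound nor any balancing argument: (i) exactly as you derive, every inserted keyframe contributes marginal value at least $\alpha - \gamma_{K}$ (feature-distance insertions report $\gamma_{K}=0$, degeneracy-triggered ones report the deficit), so $f(\mathcal{K}) \geq \alpha|\mathcal{K}| - \sum_{K \in \mathcal{K}}\gamma_{K}$; (ii) because descriptors are normalized to the unit hypersphere, every marginal of the optimal set is bounded by the diameter, $\Delta f(\mathcal{K}^{*}_{<i} \cup \{K^{*}_{i}\}) \leq 2$, hence $f(\mathcal{K}^{*}) \leq 2|\mathcal{K}|$ (comparing against an optimal set of the same cardinality, a convention the paper's proof uses implicitly); (iii) substituting $|\mathcal{K}| \geq \tfrac{1}{2}f(\mathcal{K}^{*})$ into (i) finishes. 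The constant $\alpha/2$ is not the product of a streaming-style balance between competing cases --- it is literally the threshold $\alpha$ divided by the hypersphere diameter $2$. Your stated suspicion that ``the clean $1/2$ does not drop out automatically'' identifies the gap precisely: you never supply the step that produces the bound, and the mechanism you gesture at (re-expressing the rejected mass $\alpha|\mathcal{K}^{*}\setminus\mathcal{K}|$ against $f(\mathcal{K}^{*})$ via the normalization) is not how the constant arises in either argument.

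It is worth noting that your two inequalities \emph{do} suffice to complete a proof, under the same implicit cardinality assumption $|\mathcal{K}^{*}| \leq |\mathcal{K}|$ that the paper needs. Your rejection bound gives $f(\mathcal{K}^{*}) \leq f(\mathcal{K}) + \alpha|\mathcal{K}^{*}\setminus\mathcal{K}| \leq f(\mathcal{K}) + \alpha|\mathcal{K}|$, and your first inequality gives $\alpha|\mathcal{K}| \leq f(\mathcal{K}) + \sum_{K\in\mathcal{K}}\gamma_{K}$; chaining the two yields $f(\mathcal{K}) \geq \tfrac{1}{2}f(\mathcal{K}^{*}) - \tfrac{1}{2}\sum_{K\in\mathcal{K}}\gamma_{K}$, which for $\alpha \leq 1$ is actually \emph{stronger} than the stated bound. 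So you were one substitution away from a complete (indeed sharper) proof by a genuinely different route; but as written, the ``delicate accounting'' you defer to is precisely the missing proof, and the proposal does not establish the proposition.
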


\begin{proof}
    The marginal value of each keyframe in an optimal set $\mathcal{K}^{*}$ is upper bounded by the maximum distance between any points on a unit hypersphere $\Delta f(\mathcal{K} \cup K) \leq 2 \rightarrow f(\mathcal{K}^{*}) \leq 2|\mathcal{K}|$.
    When keyframe $K$ is added, we know its marginal value is $\alpha - \gamma_{K}$.
    Thus, the total value is found by taking a sum over all placed keyframes $f(\mathcal{K}) = \sum_{K \in \mathcal{K}} (\alpha - \gamma_{K}) = \alpha |\mathcal{K}| - \sum_{K \in \mathcal{K}} \gamma_{K} \geq \frac{\alpha}{2} f(\mathcal{K}^{*}) - \sum_{K \in \mathcal{K}} \gamma_{K}$.
\end{proof}

\subsection{Submap Generation via Submodular Maximization}
\label{sub:submap_generation}

LiDAR odometry derives its accuracy from scan alignment algorithms which find the pose (translation $\textbf{t} \in \mathbb{R}^{3}$ and rotation $\textbf{q} \in \mathbb{S}^{3}$) that minimizes the nonconvex cost function 
\begin{equation}
\label{eq:alignment_obj}
    \sum_{c \in \mathcal{C}} \mathcal{E}(\textbf{t},\textbf{q},E,T),
\end{equation}
with the sum of error residuals $\mathcal{E}$ over all sufficiently close, i.e., corresponding, points in $\mathcal{C}$ between the scan $E$ and target scan $T$.
Computing an accurate solution to \cref{eq:alignment_obj} in real-time requires intelligently selecting the target scan, e.g., a submap $\mathcal{S} \subseteq \mathcal{K}$. 
We utilize the plane-to-plane cost function from \cite{segal2009generalized}.

\begin{figure}
    \centering
    \vspace{6pt}
    \includegraphics[width=0.48\textwidth]{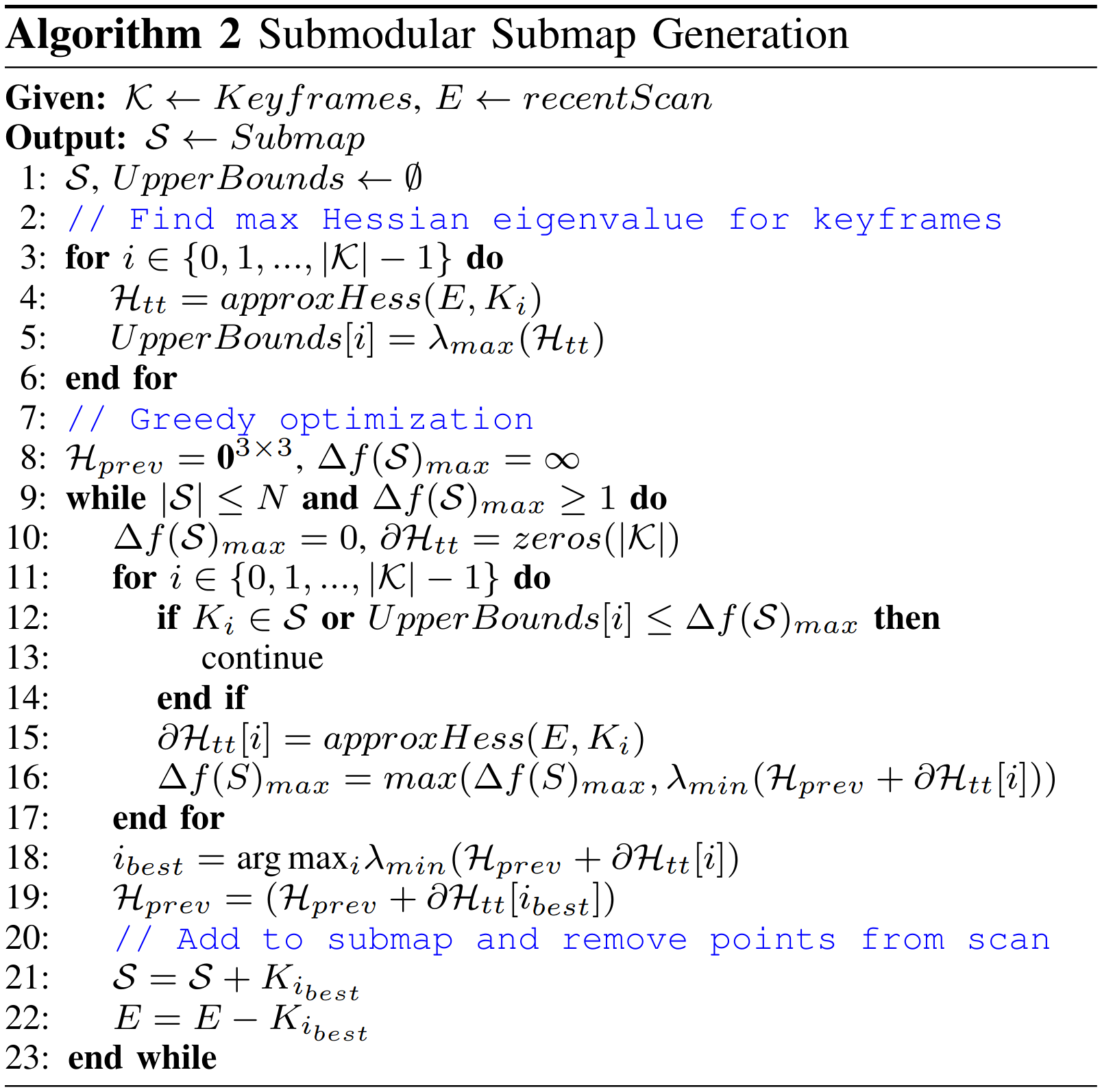}
    \label{alg:submap-generation}
    \vskip -0.45in
\end{figure}

Submap generation caries a trade-off between alignment accuracy and runtime, as previous work demonstrated the advantage of including corresponding points at the edge of the scan, but computation time scales with submap size \cite{chen2022direct}, \cite{xu2022fast}.
Poor scan alignment is primarily caused by self-similar environments, e.g., long featureless hallways, which introduce close local minima. 
The minimum eigenvalue associated with translation of the Hessian of \cref{eq:alignment_obj} $\lambda_{min}(\mathcal{H}_{tt})$ has been used to quantify how well constrained \cref{eq:alignment_obj} is \cite{ebadi2021dare, tagliabue2021lion, han2023dams}.
We assume the Hessian is available in analytic form by differentiating \cref{eq:alignment_obj} with respect to three translational variables to find a Jacobian $J_{t}$, which can be used to approximate the Hessian via $\mathcal{H}_{tt} \approx J_{t}^{T}J_{t}$.
Thus, our objective is to find the submap $\mathcal{S} \subseteq \mathcal{K}$ for a given scan $E$ that solves
\begin{equation}
\label{eq:submap_gen_obj}
    \begin{aligned}
        & \max_{\mathcal{S} \subseteq \mathcal{K}}f(\mathcal{S}) = \lambda_{min}(\mathcal{H}_{tt}(E, \mathcal{S})) \\
        & \text{Subject to}~|\mathcal{S}| \leq N,
    \end{aligned}
\end{equation}
which can be interpreted as finding the submap $\mathcal{S}$ that maximizes the minimum eigenvalue of $\mathcal{H}_{tt}$.
To make solving \cref{eq:submap_gen_obj} tractable, we make the following assumption.

\begin{assumption}
\label{assumption:hess_additive}
    The marginal value of adding a keyframe to the submap for \cref{eq:submap_gen_obj} is dominated by the additional value provided by new correspondences being made, i.e., points shared between the scan and new keyframe which did not exist in the previous submap.
\end{assumption}

Assumption~\ref{assumption:hess_additive} allows us to approximate $\mathcal{H}_{tt}$ as a sum of sub-Hessians.
We denote the sub-Hessian associated with keyframe $K_{i}$ as $\partial\mathcal{H}_{tt}[i]$ and approximate it as $\partial\mathcal{H}_{tt}[i] = \mathcal{H}_{tt}(E - \mathcal{S}_{<i}, K_{i})$, where $\mathcal{S}_{<i}$ is the submap including all keyframes with index less than $i$, and the subtraction of point clouds indicates removing the intersection of points ($A-B = A - (A\cap B))$.
Because each Hessian is the outer product of the Jacobian with itself, the Hessian must be positive semi-definite $\mathcal{H}_{tt} = J^{T}_t J_t \succeq 0$, which intuitively means each keyframe cannot make \cref{eq:alignment_obj} less constrained.
We define any keyframe which introduces enough new correspondences such that $\partial\mathcal{H}_{tt}[i] \succ 0$ as a \emph{constructive keyframe}.

\begin{proposition}
    The set function $f:\mathcal{S} \rightarrow \mathbb{R}$ in \cref{eq:submap_gen_obj} is an increasing monotone approximately submodular function under Assumption~\ref{assumption:hess_additive} as long as $N$ in the cardinality constraint $|\mathcal{S}| \leq N$ is small enough so each keyframe is constructive.
\end{proposition}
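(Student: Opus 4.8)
The plan is to lean entirely on the additive Hessian structure granted by Assumption~\ref{assumption:hess_additive} and reduce both claims to standard eigenvalue perturbation inequalities. Under the assumption I may write $\mathcal{H}_{tt}(E,\mathcal{S}) = \sum_{K_i \in \mathcal{S}} \partial\mathcal{H}_{tt}[i]$ with each $\partial\mathcal{H}_{tt}[i] \succeq 0$, and with $\partial\mathcal{H}_{tt}[i] \succ 0$ exactly when $K_i$ is constructive. Every statement about $f(\mathcal{S}) = \lambda_{min}(\mathcal{H}_{tt}(E,\mathcal{S}))$ then becomes a statement about how $\lambda_{min}$ of a sum of positive (semi)definite matrices responds to additional summands, which is governed by Weyl's inequality $\lambda_{min}(A) + \lambda_{min}(B) \leq \lambda_{min}(A+B) \leq \lambda_{min}(A) + \lambda_{max}(B)$.

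For monotonicity I would take $\mathcal{S}_A \subseteq \mathcal{S}_B$ and write $\mathcal{H}_{tt}(\mathcal{S}_B) = \mathcal{H}_{tt}(\mathcal{S}_A) + \Delta$ with $\Delta = \sum_{K_i \in \mathcal{S}_B \setminus \mathcal{S}_A} \partial\mathcal{H}_{tt}[i] \succeq 0$. The Rayleigh-quotient characterization gives $\lambda_{min}(\mathcal{H}_{tt}(\mathcal{S}_A) + \Delta) = \min_{\|x\|=1}(x^{T} \mathcal{H}_{tt}(\mathcal{S}_A)x + x^{T} \Delta x) \geq \min_{\|x\|=1} x^{T} \mathcal{H}_{tt}(\mathcal{S}_A)x = f(\mathcal{S}_A)$, since $x^{T} \Delta x \geq 0$. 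Hence $f$ is monotone increasing, and strictly increasing when the added keyframes are constructive.

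For approximate submodularity I would bound the submodularity ratio of \cref{def:submod_ratio} from below. Fix $L$ and $S$ with $S \cap L = \emptyset$ and let $\mathcal{H}_L$ be the Hessian of $L$. Applying the lower half of Weyl's inequality to each single-element gain yields $f(L \cup \{s\}) - f(L) \geq \lambda_{min}(\partial\mathcal{H}_{tt}[s])$, while the upper half applied to the aggregate, together with subadditivity of $\lambda_{max}$, gives $f(L \cup S) - f(L) \leq \lambda_{max}(\sum_{s \in S}\partial\mathcal{H}_{tt}[s]) \leq \sum_{s \in S}\lambda_{max}(\partial\mathcal{H}_{tt}[s])$. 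Dividing, the ratio obeys $\gamma \geq \min_{s} \lambda_{min}(\partial\mathcal{H}_{tt}[s]) / \max_{s}\lambda_{max}(\partial\mathcal{H}_{tt}[s])$, which is strictly positive precisely because the constructive condition forces $\lambda_{min}(\partial\mathcal{H}_{tt}[s]) > 0$. The complementary bound $\gamma \leq 1$ follows from the superadditivity half of Weyl's inequality, $\lambda_{min}(A+B) \geq \lambda_{min}(A) + \lambda_{min}(B)$: evaluating the ratio at $L = \emptyset$ already exhibits an aggregate gain no smaller than the sum of individual gains, so that particular ratio is at most one and the minimizing $\gamma$ is as well. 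Together these place $\gamma \in (0,1]$, so $f$ is approximately submodular and inherits the guarantee of \cref{def:approx_submod_bound}.

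The hard part will be securing the positive lower bound on $\gamma$, and this is where the cardinality hypothesis does its work. Everything hinges on $\lambda_{min}(\partial\mathcal{H}_{tt}[s]) > 0$: if even one keyframe contributed a merely semidefinite sub-Hessian---i.e., added no new correspondence constraining some translational direction---then $\min_s \lambda_{min}(\partial\mathcal{H}_{tt}[s]) = 0$ and the ratio bound collapses to zero, voiding the guarantee. This is exactly why one requires $N$ small enough that every selected keyframe stays constructive: since $\partial\mathcal{H}_{tt}[i]$ is assembled only from the correspondences $E - \mathcal{S}_{<i}$ absent from the current submap, the pool of fresh, direction-spanning correspondences shrinks as $\mathcal{S}$ grows, so only a sufficiently small cardinality keeps each sub-Hessian strictly positive definite and $\gamma$ bounded away from zero. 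A secondary subtlety worth flagging is that $\partial\mathcal{H}_{tt}[i]$ nominally depends on the ordering through $\mathcal{S}_{<i}$, whereas the submodularity ratio quantifies over arbitrary $L$ and $S$; Assumption~\ref{assumption:hess_additive} is precisely the device that lets me treat each keyframe's contribution as a fixed additive sub-Hessian and thereby apply the Weyl bounds uniformly.
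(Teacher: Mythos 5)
Your proposal is correct and rests on the same foundations as the paper's proof: the additive sub-Hessian decomposition granted by Assumption~\ref{assumption:hess_additive}, Rayleigh-quotient/Weyl eigenvalue inequalities for monotonicity, and constructiveness ($\partial\mathcal{H}_{tt}[i] \succ 0$) to keep the submodularity ratio positive. The difference lies in how the ratio bound is closed, and yours is the more complete argument. The paper proves strict monotone increase (each marginal gain is at least $\bar{\textbf{u}}^{T}\partial\mathcal{H}_{tt}[i]\bar{\textbf{u}} > 0$ for the Rayleigh minimizer $\bar{\textbf{u}}$) and then disposes of approximate submodularity in one sentence: since the numerator of \cref{def:submod_ratio} is a sum of such strictly positive gains, the ratio ``must be bounded away from $0$.'' That is only half an argument --- positivity of the numerator controls the ratio only if the denominator $f(L \cup S) - f(L)$ is simultaneously bounded above, which the paper never addresses. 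Your two-sided Weyl bounds supply both halves: superadditivity of $\lambda_{min}$ gives the termwise lower bound $f(L\cup\{s\})-f(L) \geq \lambda_{min}(\partial\mathcal{H}_{tt}[s])$, subadditivity of $\lambda_{max}$ gives the aggregate upper bound $f(L\cup S)-f(L) \leq \sum_{s}\lambda_{max}(\partial\mathcal{H}_{tt}[s])$, and dividing yields the explicit bound $\gamma \geq \min_{s}\lambda_{min}(\partial\mathcal{H}_{tt}[s])\,/\,\max_{s}\lambda_{max}(\partial\mathcal{H}_{tt}[s])$, which turns \cref{def:approx_submod_bound} into a computable suboptimality guarantee rather than a bare positivity claim. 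Your additional check that $\gamma \leq 1$ (evaluating the ratio at $L = \emptyset$ and again invoking superadditivity) is also absent from the paper, yet it is what matches the paper's remark defining approximate submodularity as a ratio between $0$ and $1$. Finally, both proofs rely on the same implicit license --- treating each $\partial\mathcal{H}_{tt}[i]$ as a fixed, order-independent summand even though it is defined through $\mathcal{S}_{<i}$ --- which the paper uses silently when writing $\mathcal{H}_{tt}(E,\mathcal{S}\cup K_{i}) = \mathcal{H}_{tt}(E,\mathcal{S}) + \partial\mathcal{H}_{tt}[i]$ and which you flag explicitly and correctly attribute to Assumption~\ref{assumption:hess_additive}; the cardinality hypothesis on $N$ then plays the identical role in both arguments, guaranteeing every sub-Hessian encountered is strictly positive definite.
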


\begin{proof} 
    Our proof is similar to the proof of approximate submodularity in proposition 11 of \cite{carlone2018attention}.
    We prove \cref{eq:submap_gen_obj} is monotone increasing via the marginal gain
    \begin{align*}
        &\Delta f(\mathcal{S} \cup K_{i}) = \lambda_{min} (\mathcal{H}_{tt}(E,\mathcal{S} \cup K_{i})) - \lambda_{min} (\mathcal{H}_{tt}(E,\mathcal{S}))\\
        &= \lambda_{min} (\mathcal{H}_{tt}(E,\mathcal{S}) + \partial\mathcal{H}_{tt}[i]) - \lambda_{min} (\mathcal{H}_{tt}(E,\mathcal{S}))\\
        &= \min_{||\textbf{u}||=1}\textbf{u}^{T}[\mathcal{H}_{tt}(E,\mathcal{S}) + \partial\mathcal{H}_{tt}[i]]\textbf{u} - \min_{||\textbf{v}||=1}\textbf{v}^{T} \mathcal{H}_{tt}(E,\mathcal{S})\textbf{v} \\
        &\geq \Bar{\textbf{u}}^{T}[\mathcal{H}_{tt}(E,\mathcal{S}) + \partial\mathcal{H}_{tt}[i]]\Bar{\textbf{u}} - \Bar{\textbf{u}}^{T} \mathcal{H}_{tt}(E,\mathcal{S}) \Bar{\textbf{u}} \\
        &= \Bar{\textbf{u}}^{T}\partial\mathcal{H}_{tt}[i]\Bar{\textbf{u}} > 0,
    \end{align*}
    where $\Bar{\textbf{u}}$ is the minimizer of $\min_{||\textbf{u}||=1}\textbf{u}^{T}[\mathcal{H}_{tt}(E,\mathcal{S}) + \partial\mathcal{H}_{tt}[i]]\textbf{u}$, and the final line is true because the keyframe is constructive.
    The numerator of \cref{def:submod_ratio} is the minimum marginal gain attainable, so if the function is monotone increasing, the submodularity ratio must be bounded away from $0$ proving \cref{eq:submap_gen_obj} is approximately submodular.
\end{proof}

Algorithm 2 shows our approach to submap generation, which begins by finding and saving maximum Hessian eigenvalues (lines 3-6).
Note that we do not consider keyframes at least twice the sensor range from the current pose as they share no overlapping points.
Each greedy iteration (lines 9-23) computes the highest marginal value keyframe, then adds it to the submap and removes its points from the scan such that the following iterations will compute appropriate sub-Hessians.
The resulting submap enjoys a suboptimality bound from \cref{def:approx_submod_bound} if it reaches the cardinality constraint or achieves the maximum attainable value of \cref{eq:submap_gen_obj} if the marginal value approaches zero.
We only approximate Hessians using a percent ($\sim25\%$) of randomized scan points as previous work has shown this has a minimal effect on the relative magnitude of ICP Hessian eigenvalues \cite{gelfand2003geometrically}.

We save computation by eliminating the need to test most keyframes in each greedy iteration via an application of the classical Weyl inequalities which states
\begin{equation}
\label{eq:weyl}
    \lambda_{i+j-1}(A+B) \leq \lambda_{i}(A) + \lambda_{j}(B) \leq \lambda_{i+j-n}(A+B),
\end{equation}
where $A$ and $B$ are Hermitian matrices with ordered eigenvalues such that $\lambda_{1} \geq ... \geq \lambda_{n}$. 
We use Weyl's $1st$ inequality ($i=n,j=1$) to upper bound the minimum eigenvalue of the Hessian $\lambda_{min}(\mathcal{H}_{tt} + \partial\mathcal{H}_{tt}[i]) \leq \lambda_{min}(\mathcal{H}_{tt}) + \lambda_{max}(\partial\mathcal{H}_{tt}[i])$.
Next, we note that the maximum eigenvalues of the Hessians obtained on line 5 upper bound the sub-Hessian eigenvalues $\lambda_{max}(\mathcal{H}_{tt}(E, K)) \geq \lambda_{max}(\partial\mathcal{H}_{tt}[i])$. 
This is because the maximum eigenvalue is monotone non-decreasing (Weyl's $2nd$ inequality $i=1, j=n$) and the Hessian obtained on line 5 is the sum of the sub-Hessian with its counterpart formed by points shared by the scan, keyframe, and previous submap.
This combined with the minimum eigenvalue upper bound imply an upper bound on the marginal value of adding keyframe $K_{i}$ of $\Delta \lambda_{min}(\mathcal{S} \cup K) \leq  \, \lambda_{max}(\mathcal{H}_{tt}(E, K))$.
This means any keyframe with an upper bound smaller than the best marginal gain in the current iteration can be ignored.

\subsection{Map Summary via Streaming Submodular Optimization}
\label{sub:map_summarization}

Online keyframe selection must sufficiently constrain scan alignment, but too many keyframes yields unnecessarily dense maps that are not well suited for sharing across a team nor for downstream processes such as trajectory generation.
To address this issue, we propose a method to generate summary keyframe sets after completing a SLAM session using streaming submodular optimization.
This new feature can quickly generate a summary set of keyframes given a prescribed number of keyframes or memory limit, with computation time that only scales quadratically with the number of scans collected and notably not with the prescribed number of keyframes.
Our approach is also applicable to offline selection of keyframes from, e.g., a large dataset with recorded registered point clouds.

We formulate the LiDAR map summarization task as a k-mediod problem in which the goal is to minimize the sum of pairwise dissimilarities between keyframes and elements of the ground set (all registered scans).
We use the Euclidean distance between feature vectors and the k-mediod loss function \cite{kaufman2009finding} $L(\mathcal{K}) = \frac{1}{|\textbf{E}|}\sum_{E \in \textbf{E}} \min_{K \in \mathcal{K}} ||\phi(E) \, - \, \phi(K)||$, to measure how well a keyframe set summarizes an environment.
Minimizing this loss function is generally intractable for even small datasets, since it takes approximately $1\times 10^{-5}$ seconds to check the value of a given set, the brute force method would take over five years to find an optimal summary map of 10 keyframes for 100 total scans.
We assume an auxiliary element (scan) with a feature vector equal to the zero vector ($e_{0} = \textbf{0} \in \mathbb{R}^{256}$) and instead use a streaming submodular algorithm to maximize
\vskip -0.2in
\begin{align}
\label{eq:summary_obj}
    \max_{\mathcal{K} \in \textbf{E}}f(\mathcal{K})= L(e_0)-L(\mathcal{K} \cup e_0),
\end{align}
which is proven monotone submodular in \cite{kaufman2009finding}.

\begin{figure}
    \centering
    \vspace{6pt}
    \includegraphics[width=0.48\textwidth]{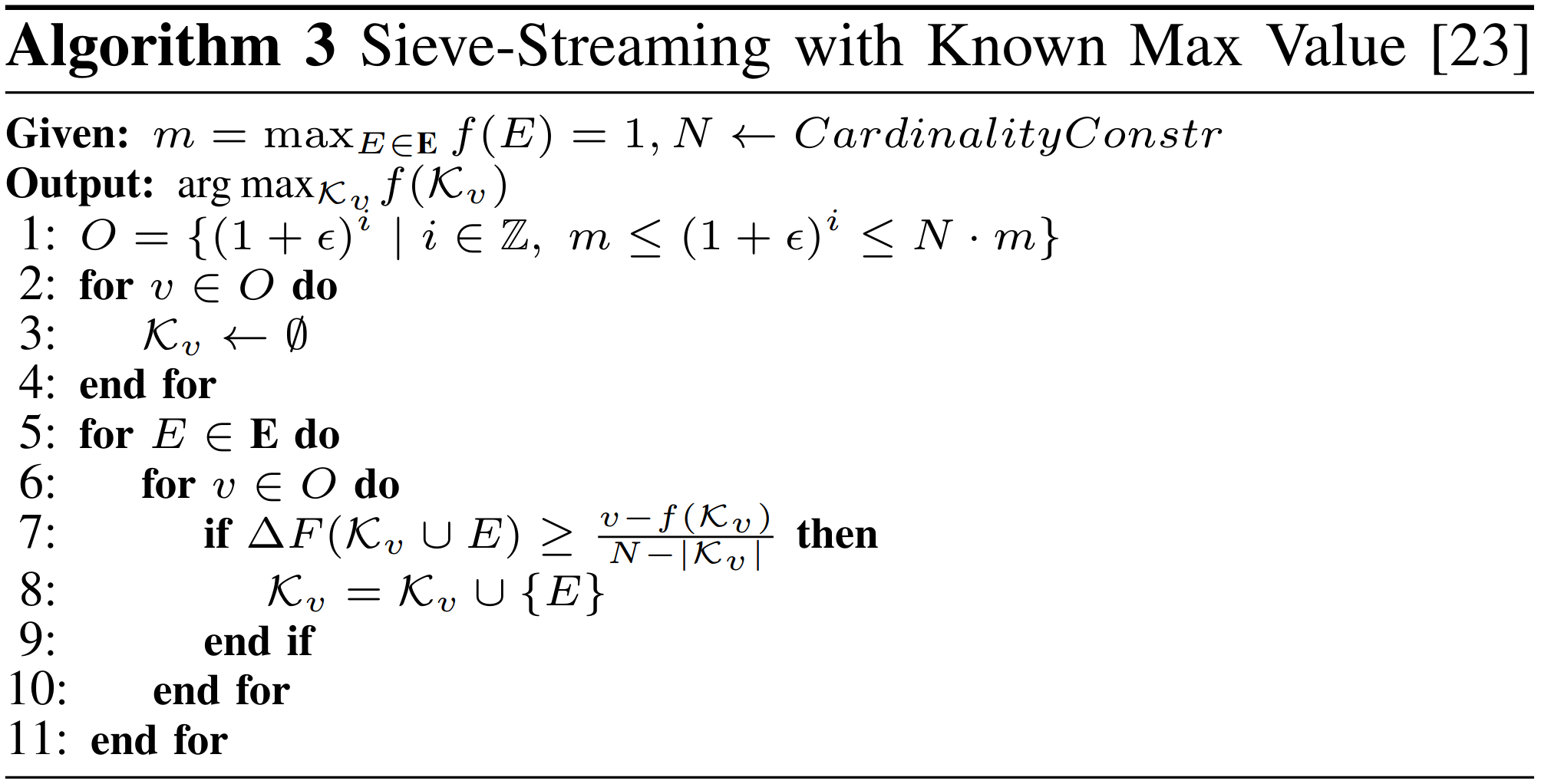}
    \label{alg:seive-streaming}
    \vskip -0.48in
\end{figure}
% \begin{algorithm}[t]
% \scriptsize
% \caption{Sieve-Streaming with Known Max Value \cite{badanidiyuru2014streaming}}
% \label{alg:seive-streaming}
% \begin{algorithmic}[1]
%     \renewcommand{\algorithmicrequire}{\textbf{Given:}}
%     \renewcommand{\algorithmicensure}{\textbf{Output:}}
%     \REQUIRE $m=\max_{E \in \textbf{E}}f(E) = 1, N \leftarrow CardinalityConstr$ \\
%     \ENSURE $\text{arg}\,\text{max}_{\mathcal{K}_{v}} f(\mathcal{K}_{v})$ \\
%     \STATE $O = \{(1+\epsilon)^{i} ~|~i \in \mathbb{Z},~ m \leq (1+\epsilon)^{i} \leq N \cdot m \}$
%     \FOR {$v \in O$}
%         \STATE $\mathcal{K}_{v} \leftarrow \emptyset$
%     \ENDFOR
%     \FOR {$E \in \textbf{E}$}
%         \FOR {$v\in O$}
%             \IF {$\Delta F(\mathcal{K}_{v} \cup E) \geq \frac{v-f(\mathcal{K}_{v})}{N-|\mathcal{K}_{v}|}$}
%                 \STATE $\mathcal{K}_{v} = \mathcal{K}_{v} \cup \{E\}$
%             \ENDIF
%         \ENDFOR
%     \ENDFOR
% \end{algorithmic}
% \end{algorithm}

\begin{figure*}[t]
    \centering
    \vspace{6pt}
    \includegraphics[width=0.95\textwidth]{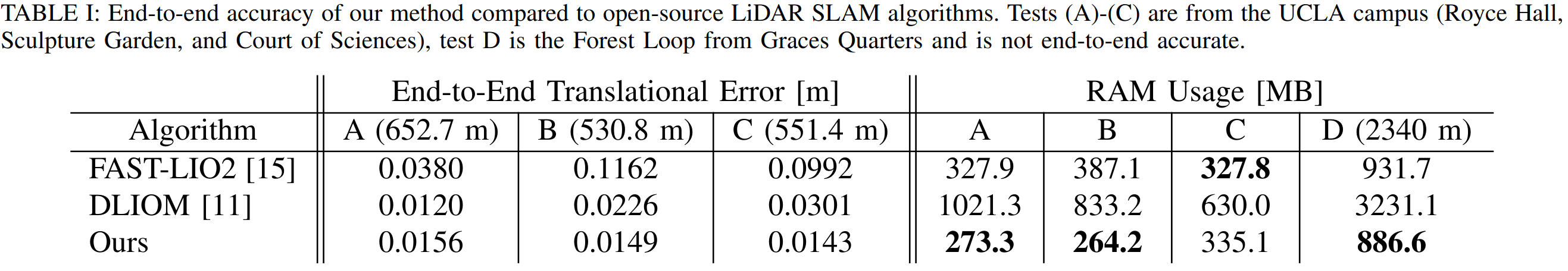}
    \label{tab:end-to-end_comp}
    \vskip -0.25in
\end{figure*}

We follow the seive-streaming algorithm described in \cite{badanidiyuru2014streaming} and given in Algorithm 3.
The maximum value of any single scan is $\max_{E \in \textbf{E}}f(e_{0} \cup E) = 1$ because each descriptor is mapped to a unit hypersphere and the auxiliary element is a distance of 1 from all scans.
We initialize each guess of the optimal value, $v$, and their corresponding solution sets, $O$ (lines 1-4).
The loop on lines 5-11 makes a single pass over all scans, adding a keyframe to a solution set if its marginal value exceeds a $v$-dependant threshold (line 7-8), with the function returning the highest value solution set.
The greedy maximization of \cref{eq:summary_obj} requires a pass over the dataset for each keyframe added (total of $N$) and the according complexity is $O(N \cdot |\textbf{E}|^{2})$.
On the other hand, the streaming submodular method only requires a single pass, improving the runtime complexity to $O(\frac{1}{\log \epsilon} |\textbf{E}|^2)$ and eliminating the need to have the entire dataset loaded in RAM at any point.

\section{Results}
\label{sec:results}

\begin{figure}
\vskip 0.05in
    \centering
    \includegraphics[width=0.48\textwidth]{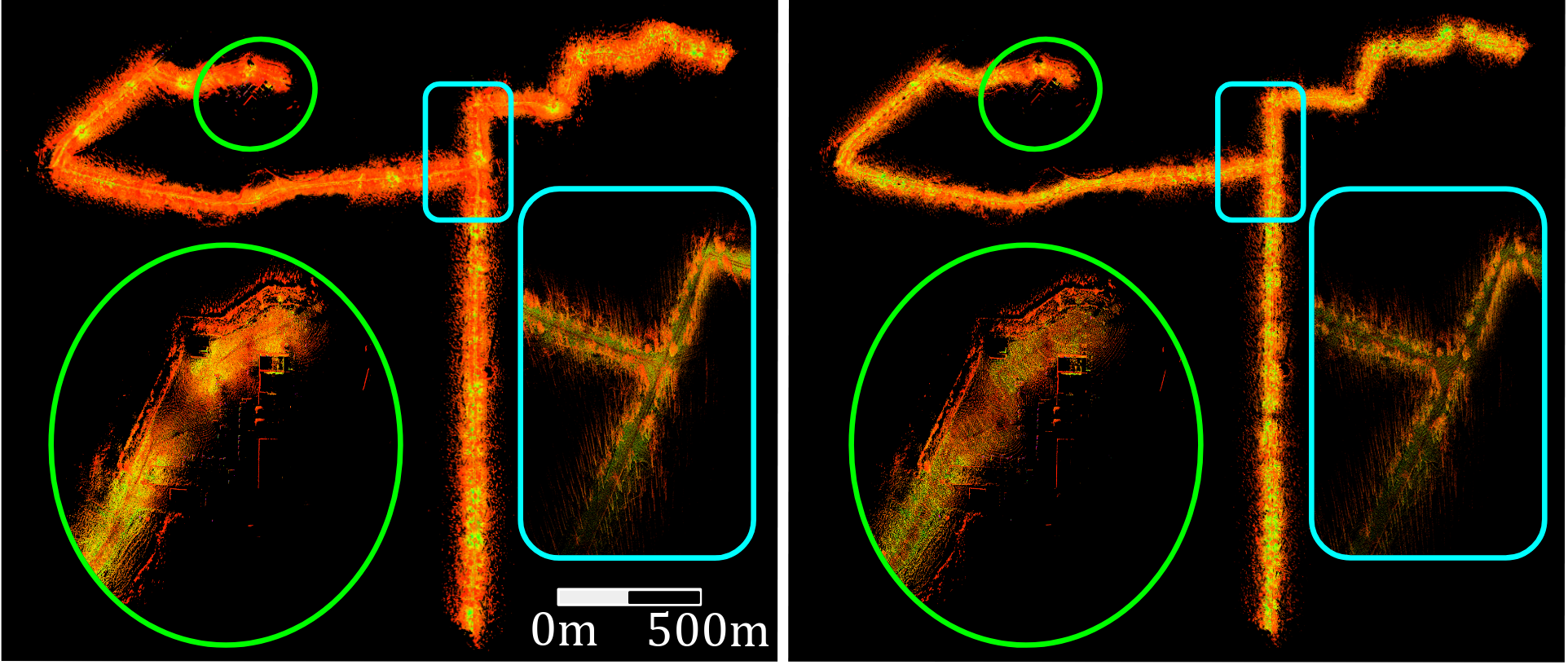}
    \caption{Map summaries for 6.2 km Mout Water Dataset from Graces Quarters. Left figure shows the full keyframe map using 527 keyframes and 186 MB. Right shows the summary map using 300 keyframes and 110 MB. Insets show detailed comparison of the maps.}
    \label{fig:summary_mout-water}
    \vskip -0.15in
\end{figure}

\begin{figure}
    \centering
    \includegraphics[width=.48\textwidth]{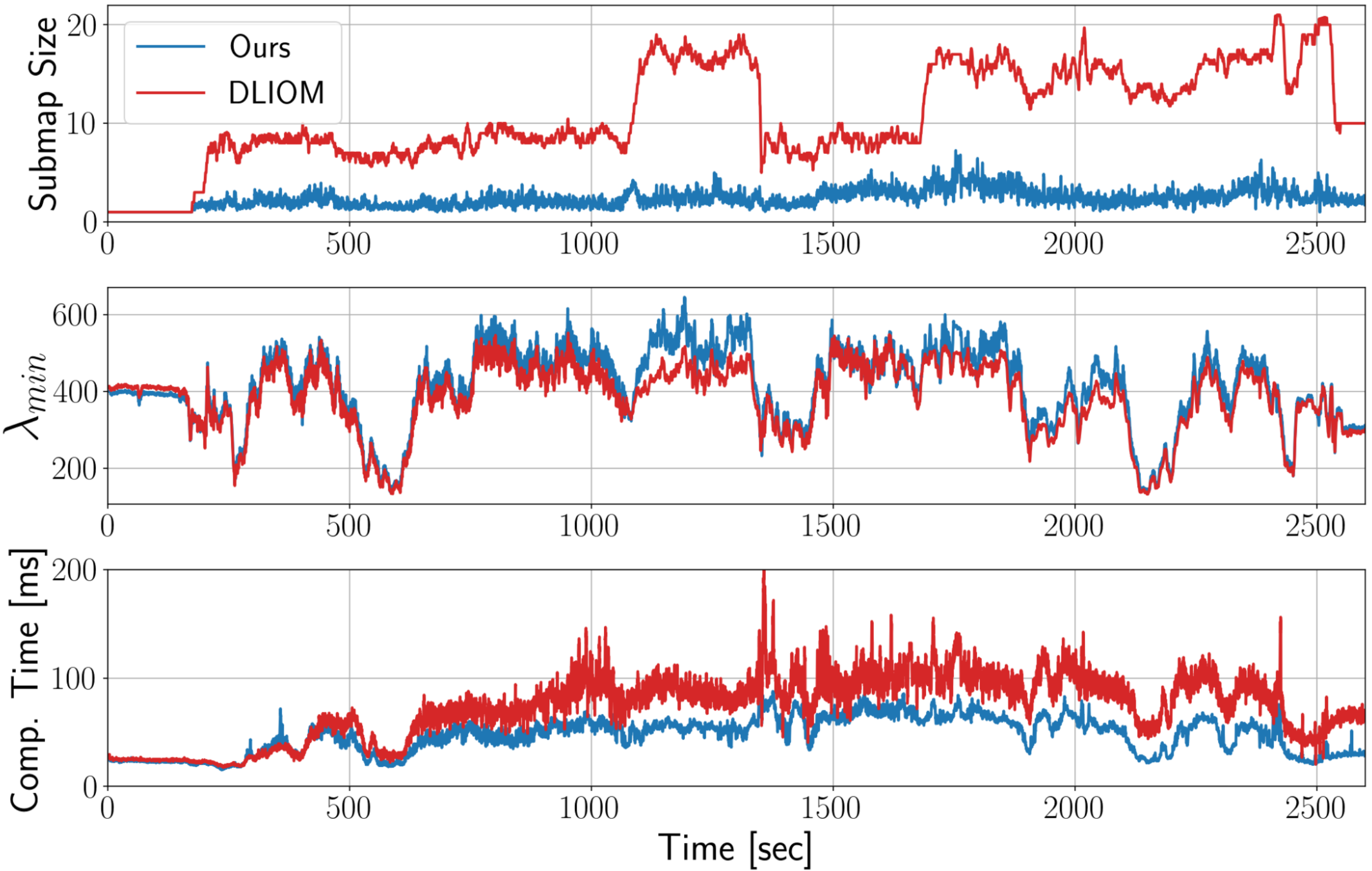}
    \caption{Submap generation comparison using 6.2 km Mout Water dataset using identical keyframe selection strategies. Top shows keyframes used per submap (DLIOM avg. 10.4 Ours avg. 2.3). Middle shows the minimum Hessian eigenvalue (DLIOM avg. 377 Ours avg. 407). Bottom shows per scan computation time (DLIOM avg. 67.3 ms Ours avg. 45.4 ms).}
    \label{fig:submap_evals}
    \vskip -0.24in
\end{figure}

We use a modified version of DLIOM \cite{chen2023dliom} where keyframes used for scan-to-submap alignment are the primary source of memory usage.
Our experiments use datasets collected at UCLA with a 10 Hz 32-beam Ouster OS0 and at the Army Research Laboratory Graces Quarters facility with a 20 Hz 128-beam Ouster OS1 mounted on a Clearpath Warthog.
Graces Quarters datasets include a 2.3 km loop and 6.2 km out-and-back shown in \cref{fig:top-right} and \cref{fig:summary_mout-water}.
The neural network was trained on UCLA campus data for 30 epochs.

\textit{Online Keyframe Selection.} The first experiment compares end-to-end accuracy and memory usage (measured by RSS) of our approach against DLIOM \cite{chen2023dliom} and a competitive open-source LIO algorithm \cite{xu2022fast}.
We note that memory allocation is a complex process which RSS does not entirely capture as running these algorithms on different hardware can affect RSS.
All results in Table 1 were made using an i7-10875H 8-Core CPU with 16 GB of RAM, with test D (Forest Loop) being included to demonstrate the memory demands of long-duration missions. 
We achieve comparable end-to-end accuracy with DLIOM despite using 64\% less memory on average.
This is due to a reduction in the number of keyframes selected, as we select 80\% less keyframes than DLIOM.
Our efficient approach to keyframe selection is shown in the bottom-right of \cref{fig:top-right} where eight keyframes each capture unique areas around a large sculpture.

\textit{Submodular Submap Generation.} Submap generation uses fewer keyframes to improve computation time without affecting scan alignment.
\cref{fig:submap_evals} shows a reduction from an average submap size of 10.4 keyframes using DLIOM to 2.3 using our method over a 6.2 km, 43 minute trajectory using an identical set of keyframes (placed every 5 m).
We achieve a slightly improved alignment constraint over the trajectory, improving from an average minimum Hessian eigenvalue of 377 to 407, while also decreasing the average computation time per scan from 67 to 45 ms.
This improvement is due to the submap generation selecting a smaller set of keyframes which can include distant keyframes with nonzero marginal value that heuristics might not consider.

\textit{Map Summarization.} \cref{fig:top-right} and \cref{fig:summary_mout-water} show the map summarization tool in use on datasets at Graces Quarters. 
The summary maps for the Forest Loop (2.3 km, 28.5 minutes) selected 75 and 300 keyframes and were generated in 0.67 and 0.69 seconds. 
The Mout Water (6.2 km, 43 minutes) summary map was generated using 300 keyframes in 1.65 seconds.
The insets in \cref{fig:summary_mout-water} show a similar level of detail despite the summary map using 40\% less memory.
The streaming approach improved the computation time of the summary map generation, as the same forest loop maps took 1.94 and 7.06 seconds respectively using a greedy approach.
\vskip -0.1in

\section{Conclusion}
\label{sec:conclusion}

We presented a novel online and offline solution for keyframe selection in addition to a new keyframe submap generation procedure.
Online keyframe selection identifies unique keyframes by generalizing the common distance threshold to feature space where each descriptor encodes point cloud similarity.
We maintain similar localization accuracy while reducing the average submap size and improving per scan computation time by optimizing submap generation for scan alignment constraint. 
Finally, our streaming submodular approach to map summarization generates size specified maps for efficient communication and downstream processes.

% \vspace{0.05in}
\noindent \textbf{Acknowledgements} \ \  We thank Jonathan Bunton and Matteo Marchi for introducing us to submodular optimization and Kenny Chen, Ryan Nemiroff, Stanley Wei, Nakul Joshi, Zihao Dong, and Jeffrey Pflueger for their support. We also thank Brian Kaukeinen from ARL for supporting testing.
% \vspace{1pt}
% \vspace{-0.1in}

\FloatBarrier
\bibliographystyle{IEEEtran}
\bibliography{references}

\end{document}